\documentclass[letterpaper, 10 pt, journal, twoside]{IEEEtran} 
\usepackage[utf8]{inputenc}



\usepackage{units}
\usepackage{algorithm, algorithmic}
\usepackage{amsmath}
\usepackage{amssymb}
\usepackage{tabularx}
\usepackage{booktabs}
\usepackage{cite} 

\usepackage{amsthm} 
\newtheorem{lemm}{Lemma}

\usepackage[colorinlistoftodos]{todonotes}

\renewcommand\vec[1]{\boldsymbol{#1}} 
\newcommand{\mat}[1]{\boldsymbol{#1}} 

\newcommand{\de}{\;\mathrm{d}} 

\newcommand{\transp}{^{\top}} 
\DeclareMathOperator{\E}{\mathbb{E}} 



\newif\ifcorrectingmode
\correctingmodefalse

\usepackage[normalem]{ulem}
\ifcorrectingmode
	\newcommand{\revision}[1]{\textcolor{blue}{#1}}
	\newcommand{\deleted}[1]{\textcolor{red}{\ifmmode\text{\sout{\ensuremath{#1}}}\else\sout{#1}\fi}}
	\newcommand{\deletedequation}[2]{\textcolor{red}{\centerline{Removed equation (#1)}}}
\else
	\newcommand{\revision}[1]{#1}
	\newcommand{\deleted}[1]{}
	\newcommand{\deletedequation}[2]{}
\fi

\usepackage[printonlyused]{acronym}
\newacro{DOF}{degree of freedom}
\newacroplural{DOF}{degrees of freedom}
\acrodefplural{DOF}[DOFs]{degrees of freedom}
\newacro{iLQR}{Iterative Linear-Quadratic Regulator}
\newacro{CT}{Control Toolbox}
\newacro{EOM}{equations of motion}
\newacro{OC}{Optimal Control}
\newacro{LQR}{linear-quadratic regulator}
\newacro{PD}{proportional derivative}
\newacro{MPC}{Model Predictive Control}
\newacro{LQ}{linear quadratic}
\newacro{LQOC}{Linear-Quadratic Optimal Control}
\newacro{TO}{Trajectory Optimization}
\newacro{DDP}{Differential Dynamic Programming}
\newacro{COM}{center of mass}
\newacro{COP}{center of pressure}
\newacro{NLP}{nonlinear program}
\newacro{MLP}{Multilayer Perceptron}
\newacro{SLQ}{Sequential Linear-Quadratic}
\newacro{HAA}{hip abduction adduction}
\newacro{AD}{automatic differentiation}
\newacro{HJB}{Hamilton–Jacobi–Bellman}
\newacro{BC}{Behavioral Cloning}
\newacro{IL}{Imitation Learning}
\newacro{RL}{Reinforcement Learning}


\def\TheTitle{MPC-Net: A First Principles Guided Policy Search}


\setlength{\abovecaptionskip}{-3pt}
\setlength{\textfloatsep}{10pt} 
\setlength{\floatsep}{5pt} 

\begin{document}

\title{\TheTitle}
\author{Jan Carius$^{1}$, Farbod Farshidian, and Marco Hutter%
    \thanks{Manuscript received: September 09, 2019; Revised December 14, 2019; Accepted January 22, 2020.}
    \thanks{This paper was recommended for publication by Editor Dongheui Lee upon evaluation of the Associate Editor and Reviewers' comments.
    This work was supported by Intel Labs, the Swiss National Science Foundation (SNF) through project 166232, 188596, the National Centre of Competence in Research Robotics (NCCR Robotics), and the European Union's Horizon 2020 research and innovation program under grant agreement No 780883. This work was conducted as part of ANYmal Research, a community to advance legged robotics.}%
    \thanks{All authors are with the Robotic Systems Lab, ETH Z\"u{}rich, Switzerland. {$^1$\tt\footnotesize jcarius@ethz.ch}}%
    \thanks{Digital Object Identifier (DOI): see top of this page.}
    }

\markboth{IEEE Robotics and Automation Letters. Preprint Version. Accepted January, 2020}
{Carius \MakeLowercase{\textit{et al.}}: \TheTitle}

\maketitle
%
%
%
\begin{abstract}
We present an Imitation Learning approach for the control of dynamical systems with a known model.
Our policy search method is guided by solutions from MPC.
Typical policy search methods of this kind minimize a distance metric between the guiding demonstrations and the learned policy.
Our loss function, however, corresponds to the minimization of the control Hamiltonian, which derives from the principle of optimality.
Therefore, our algorithm directly attempts to solve the optimality conditions with a parameterized class of control laws.
Additionally, the proposed loss function explicitly encodes the constraints of the optimal control problem and we provide numerical evidence that its minimization achieves improved constraint satisfaction.
We train a mixture-of-expert neural network architecture for controlling a quadrupedal robot and show that this policy structure is well suited for such multimodal systems.
The learned policy can successfully stabilize different gaits on the real walking robot from less than 10 min of demonstration data.
\end{abstract}
%
%
\begin{IEEEkeywords}
Learning from Demonstration, Legged Robots, Optimization and Optimal Control
\end{IEEEkeywords}
\IEEEpeerreviewmaketitle
%
%
\section{Introduction}
The control of robotic systems with fast and unstable dynamics requires carefully designed feedback controllers.
Hybrid, underactuated walking robots pose an especially challenging setting in this respect.

Recent successes in \ac{RL} demonstrate sophisticated walking robot control~\cite{Tan18, Iscen18, Haarnoja18, Hwangbo19, Xie19}, yet a large number of policy rollouts need to be collected to reach the required performance level.
It is, therefore, common practice to use physics simulators during training and subsequently attempt a sim-to-real transfer~\cite{Tan18, Hwangbo19}.

\ac{IL}~\cite{Osa18} appears to be a promising method that could reduce the sampling complexity of learning-based approaches by guiding them with expert demonstrations.
When good demonstrations are available, sampling efficiency can be drastically improved over classical \ac{RL}~\cite{Sun17}.

An appealing way to automatically generate such demonstrations for known dynamical systems are model-based methods such as \ac{OC} and \ac{MPC}.
They provide a formal framework for generating control commands that respect physical constraints and optimize a performance criterion.
Knowledge of a system model and its gradients enable such methods to discover complex robot behaviors in a very sample-efficient way~\cite{Park15, Naveau17, Farshidian17MPC, Winkler18, Neunert18, Carius19}.
Unfortunately, when deploying on a robot, the entire optimization problem has to be solved online because the resulting control policy is only valid around the current state.
Moreover, the robustness against disturbances -- both of intrinsic nature (e.g., modeling errors) as well as external effects -- is critically dependent on the assumption that a new motion plan can be generated sufficiently fast.
Even for moderately complex systems, the update frequency of \ac{MPC} becomes a limiting factor when deploying on onboard computers.

Learning from \ac{OC} solutions has proven a viable option for robot control that combines the advantages of both approaches~\cite{Ratliff06, Abbeel10, Mordatch14, Levine13, Levine14, Kahn17, Choudhury17, Yang19}.
The benefit of using a solver as expert demonstrator over humans or animals is that there is no domain adaptation problem, and one can query demonstrations from arbitrary states.
Additionally, one may request the solver to explicitly handle constraints instead of only presuming that demonstrations are constraint-consistent.

Several methods take an inverse \ac{OC} approach to \ac{IL}:
Multiple local approximations of the value function, computed by \ac{OC} runs, are aggregated into a single global approximation~\cite{Atkeson02, Zhong13, Mansard18}.
The learned value function and its induced optimal policy are in turn used to reduce the \ac{OC} time horizon or speed up convergence.
Alternatively, a \ac{BC} approach to \ac{IL} attempts to directly learn a policy that reproduces the expert's demonstrations without maintaining a value function explicitly.
Accordingly, the original \ac{RL} problem is transformed into a supervised learning problem since the demonstrator's actions can be interpreted as labels.

Our proposed algorithm belongs to the family of such actor-only approaches:
We introduce MPC-Net, a \emph{policy search} method that is \emph{guided} by an \ac{MPC} algorithm to find a parametrized control policy.
The method can be seen as a policy iteration scheme that draws data from a perfect critic (i.e., the \ac{MPC}).
Our key innovation is a theoretically motivated loss function, which is based on \emph{first principles} from \ac{OC}, namely the minimization of the control Hamiltonian.
The structure of the control Hamiltonian captures the system dynamics and constraints of the control problem.
We show that this learning objective has favorable properties in terms of convergence and constraint satisfaction, which is particularly important for systems interacting with the environment.

Closely related to our algorithm are policy search methods with a teacher-learner setup~\cite{Levine13, Levine14, Kahn17}.
These works employ an \ac{OC} solver as a teacher from which a policy is learned.
Contrary to our work, however, the teacher adapts to the student.
This assimilation is achieved by adding a penalty term to the \ac{OC} cost function so that demonstrations are created that remain close to the student's policy.
Additionally, the student's objective is usually the optimization of a distance metric between student's and teacher's policy outputs.
However, minimizing a distance measure may not directly correspond to improved performance, e.g., in constrained settings it is usually more important to satisfy constraints rather than mimicking the teacher accurately.
In our approach, no such choice of a distance metric has to be made.
Notably, our learner is never presented with the optimal control input.
Additionally, since our demonstrator does not adapt to the current policy of the learner, all demonstration samples remain valid and can be re-used, thereby boosting sampling efficiency.

Imitating a demonstrator that is not adaptive to the learner induces the problem of distribution matching:
Inevitable approximation errors between the learned and demonstrated policies make rollouts of the learned policy encounter a different distribution of states than the one from demonstration data.
Ross et al.~\cite{Ross10, Ross11} show that the resulting errors can compound quadratically in the time horizon.
We use elements of their proposed solutions (i.e., probabilistic mixing and dataset augmentation) to ensure that the distributions match.
Simply put, we bias the demonstrator's query states towards the observations that our policy sees and thereby receive samples that match the learner's distribution better.

While the idea of policy search through minimization of the control Hamiltonian applies to arbitrary parameterized policies such as neural networks, weighted motion primitives, or spline coefficients, we consider the very general class of mixture-of-expert neural networks policies~\cite{Jacobs91} in this work.
Our choice caters for the fact that \ac{OC} is an inverse problem with potentially multiple solutions for the same observation.
The expert data may, therefore, exhibit such multimodal behavior.
We show that this choice of network structure has favorable properties in terms of convergence and constraint satisfaction and is particularly suitable for controlling legged robots since these systems inherently exhibit multi-modal dynamics.

\subsection*{Statement of Contributions}
The contributions of this work are as follows:
\begin{itemize}
    \item Derivation of a novel loss function for policy search that is based on fundamental concepts from \ac{OC}
    \item Experiments showing that the explicit enconding of constraints in our loss function achieves improved constraint satisfaction compared to standard behavioral cloning
    \item Demonstration of improved efficiency in terms of \ac{MPC} calls by exploiting a local approximation of the value function
    \item Results showing that a mixture-of-expert network architecture outperforms a general \ac{MLP} for control of a walking robot
    \item Validation of the trained control policies on robotic hardware. The learned controllers successfully stabilize two different gaits on a quadrupedal robot
\end{itemize}

%
\section{Method}
The key steps of our method are listed in Alg.~\ref{alg:policy_learning}\footnote{Our implementation is openly available at \texttt{https://github.com/leggedrobotics/MPC-Net}} and are schematically shown in Fig.~\ref{fig:method_schematic}.
Data is generated by running \ac{MPC} from a feasible, random initial state.
Samples from the resulting optimal trajectories are stored in a replay buffer.
At each policy update step, we construct a loss function by drawing a batch of the stored samples and perform a stochastic gradient descent step in the policy parameter space.
Every mpcDecimation-th iteration, \ac{MPC} produces a new set of samples that augment the dataset.\newline
In this section, we first explain the control problem and the structure of its solution.
Subsequently, we present the theoretical properties of the optimal solution and how they motivate our loss function.
Finally, we show how a neural network policy is trained from demonstrations of optimal trajectories.
\begin{figure}
    \centering
    \includegraphics[width=0.95\columnwidth]{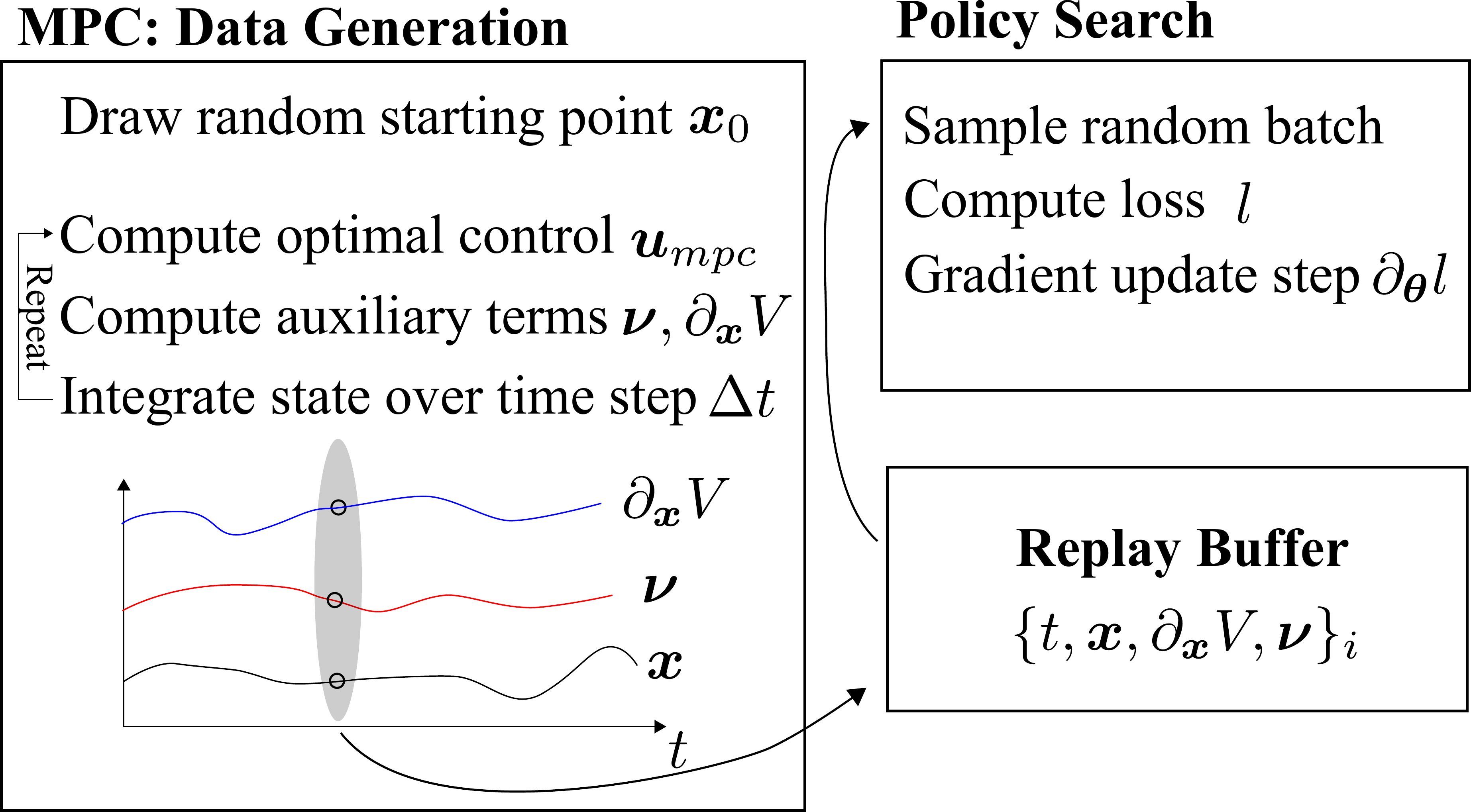}
    \caption{Schematic of the MPC-Net policy learning approach}
    \label{fig:method_schematic}
\end{figure}
\begin{algorithm}[tpb]
    \caption{MPC-Net Guided Policy Learning}
    \label{alg:policy_learning}
    \begin{algorithmic}[1]
        \addtolength{\algorithmicindent}{-1pt}
        \STATE \textbf{Given:} Replay Buffer $\mat{M}$, mpcSolver
        \STATE \textbf{Given hyperparameters:} maxIter, mpcDecimation, batchSize, learningRate, rolloutLength
        \FOR{iter in [1 : maxIter]}
            \IF{modulo(iter, mpcDecimation)}
                \STATE $\alpha \gets 1 - $ iter / maxIter
                \STATE $\vec{x}_0 \gets$ sampleRandomStartingState
                \FOR{$t_i$ in [0, rolloutLength]}
                \STATE $\vec{u}_{mpc}, \mat{K} \gets $ mpcSolver($t_i$, $\vec{x}_0$)
                    \STATE $\vec{x} \gets $ sampleInNeighborhood($\vec{x}_0$)
                    \STATE $\partial_{\vec{x}}V \gets $ valueFunctionDerivative($t_i, \vec{x}$)
                    \STATE $\vec{\nu} \gets $ constraintLagrangian($t_i, \vec{x}$)
                    \STATE Append sample $\{t_i, \vec{x}, \partial_{\vec{x}}V, \vec{\nu}\}$ to $\mat{M}$
                    \STATE $\vec{x}_0 \gets$ stepSystem($\vec{x}_0, \alpha \vec{\pi}_{\text{mpc}} + (1-\alpha) \vec{\pi}(\vec{\theta}_{\text{iter}}, \vec{x}_0))$
                \ENDFOR
            \ENDIF
            \STATE $\mat{S}$ $\gets$ drawRandomSampleBatch($\mat{M}$, batchSize)
            \STATE $\mat{U} \gets $ evaluatePolicyOnSamples($\vec{\pi}(\vec{\theta}_\text{iter}), \mat{S}$)
            \STATE $l \gets$ computeLoss($\mat{U}, \mat{S}$)
            \STATE $\vec{\theta}_{\text{iter} + 1} \gets $ stepOptimizer($\partial_{\vec{\theta}}l$)
        \ENDFOR
    \end{algorithmic}
\end{algorithm}
\subsection{Model Predictive Control}\label{sec:MPC}
We consider a continuous-time, finite horizon \ac{OC} problem
\begin{align}
& \underset{\vec{u} (\cdot)}{\text{minimize}}
& & \Phi(\vec{x} (t_{f})) + \int_{0}^{t_{f}} l(\vec{x}, \vec{u}, t) \de t ,
\label{eq:cost}
\\
& \text{subject to}
& & \dot{\vec{x}} =  \vec{f}(\vec{x}, \vec{u}, t), \quad \vec{x}(0) = \vec{x}_0,
\notag
\\
& & & \vec{g}(\vec{x}, \vec{u}, t) = \vec{0},
\notag
\\
& & & \vec{h}(\vec{x}, \vec{u}, t) \geq \vec{0},
\label{eq:constraints}
\end{align}
where $t_f$ is the time horizon, $\vec{x}_0$ a given initial state, $\Phi(\cdot)$ the final cost and $l(\cdot)$ the intermediate cost function.
$\vec{f(\cdot)}$, $\vec{g}(\cdot)$, and $\vec{h}(\cdot)$ are time-dependent vector fields defining the system dynamics, the equality constraints, and the inequality constraints, respectively.
The problem's associated optimal value function~$V$ (cost-to-go) is defined as
\begin{equation}
  V(t,\vec{x}) = \min\limits_{\substack{\vec{u}(\cdot), \\ \text{s.t.}\, \eqref{eq:constraints}}} \quad \Phi(\vec{x}(t_f)) + \int_t^{t_f} l(\vec{x}(t),\vec{u}(t), t) \de t \; . \label{eq:value_function}
\end{equation}

In principle, our method works with any optimization algorithm that can handle the constraints~\eqref{eq:constraints} and that provides an approximation of the optimal value function~\eqref{eq:value_function}.
The entire solving procedure is denoted mpcSolver in Alg.~\ref{alg:policy_learning}.
In this work, we employ a variant of the \ac{DDP} algorithm called \ac{SLQ} control~\cite{Farshidian17}, which is the continuous-time equivalent to the \ac{iLQR}~\cite{Li04}.
This solver handles the inequality constraints $\vec{h}(\cdot)$ through a barrier function $b(\cdot)$~\cite{Grandia19} and explicitly computes optimal Lagrange multipliers $\vec{\nu}(\cdot)$ for satisfaction of the state-input equality constraint $\vec{g}(\cdot)$ \cite{Farshidian17}.
The Lagrangian of the \ac{OC} problem~\eqref{eq:cost} is therefore given by
\begin{align}
\mathcal{L}(\vec{x}, \vec{u}, t) := l(\vec{x}, \vec{u}, t) &+ \sum_i b\big(\vec{h}_i(\vec{x}, \vec{u}, t)\big) \nonumber \\
&+ \vec{\nu}\transp(t, \vec{x}) \vec{g}(\vec{x}, \vec{u}, t).
\end{align}

The solution of problem~\eqref{eq:cost} consists of nominal state and input trajectories $\{\vec{x}_\text{nom}(\cdot), \vec{u}_\text{nom}(\cdot)\}$ as well as time-dependent linear feedback gains $\mat{K}(t)$ that define the optimal control policy
\begin{equation}
  \vec{\pi}_\text{mpc}(t, \vec{x}) = \vec{u}_\text{nom}(t) + \mat{K}(t)\,(\vec{x} - \vec{x}_\text{nom}(t))  \; . \label{eq:mpc_feedback_law}
\end{equation}
As a byproduct of the solver, we also have access to the state derivative of the value function $\partial_{\vec{x}} V$.

During our emulated real-time MPC loop, we let the solver compute the optimal policy, then store the values of $\{t, \vec{x}, \partial_{\vec{x}} V, \vec{\nu}\}$ at the first time step of the solution in our replay memory. Next, we update the current state using the system dynamics and continue until the rollout length is reached.
\subsection{Policy Loss Function}\label{sec:policy_loss_function}
It is a known property of \ac{OC}~\cite[pp. 111--120]{Bertsekas05} that the optimal input $\vec{u}^*(t)$ must satisfy
\begin{align}
    \vec{u}^*(t, \vec{x}) &= \arg \min\limits_{\vec{u}}  \mathcal{H}(\vec{x},\vec{u},t)
    \; , \label{eq:hamiltonian_minimization} \\
    \mathcal{H}(\vec{x},\vec{u},t) &:= \mathcal{L}(\vec{x},\vec{u},t) + \partial_{\vec{x}} V(t,\vec{x}) \vec{f}(\vec{x},\vec{u}, t) \; , \label{eq:hamiltonian}
\end{align}
where $\mathcal{H}(\cdot)$ is the control Hamiltonian, which directly arises from the \ac{HJB} equation.
Moreover, under some sufficient conditions (so-called Weierstrass conditions), the Hamiltonian attains a strong minimum at $\vec{u}^*$.
Its minimization can, therefore, be seen as a recipe for finding the optimal controls.

A globally optimal policy would have to satisfy~\eqref{eq:hamiltonian_minimization} at \emph{any}~${t, \vec{x}}$.
This minimization, however, has a great drawback, which is commonly referred to as the ``curse of dimensionality".
Furthermore, recording the solution of every time-state pair requires an enormous amount of storage, which is impractical even for moderate-size systems.

We turn to function approximation as a remedy for these difficulties and introduce a parameterized policy $\vec{\pi}(t, \vec{x} | \vec{\theta})$.
Our problem is now to find some parameters $\vec{\theta}^*$ such that $\vec{\pi}(\cdot)$ maps a given $(t,\vec{x})$ pair to a control that achieves a minimum of $\mathcal{H}$.
The substitution of the pointwise optimal $\vec{u}^*$ with the parameterized policy may introduce an optimality gap between the minima $\mathcal{H}(\vec{x}, \vec{u}^*, t)$ and $\mathcal{H}(\vec{x}, \vec{\pi}( \vec{x}, t| \vec{\theta}^*), t)$.
We can relate the size of this optimality gap to the discrepancy in optimal controls through the following lemma:
\revision{
\begin{lemm}\label{lem:hamiltonian_bound}
Given that $\vec{u}^*$ is a strong minimum of equation~\eqref{eq:hamiltonian_minimization} which satisfies the Weierstrass sufficient condition of optimality.
Then the optimality gap in the pointwise minimization of the control Hamiltonian~\eqref{eq:hamiltonian_minimization} upper bounds the distance to the optimal control according to
\begin{equation}\label{eq:hamiltonian_gap_pointwise}
|| \vec{\pi} - \vec{u}^* ||^2
\leq
\frac{2}{\delta} (\mathcal{H}(\vec{x}, \vec{\pi}, t) - \mathcal{H}(\vec{x}, \vec{u}^*, t)) \; ,
\end{equation}%
where $\delta > 0$ pertains to the smallest eigenvalue of $\partial_{\vec{u}}^2 \mathcal{H}$ in the neighborhood of the optimal control, $\vec{u}^*$.
\end{lemm}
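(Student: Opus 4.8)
The plan is to read off the inequality from a second-order Taylor expansion of the control Hamiltonian $\mathcal{H}(\vec{x},\cdot,t)$ about its minimizer $\vec{u}^*$: first-order stationarity kills the linear term, and the curvature lower bound supplied by the Weierstrass condition controls the quadratic remainder from below.

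First I would fix the time--state pair $(t,\vec{x})$ and treat $\mathcal{H}$ as a smooth function of $\vec{u}$ alone. This is legitimate because the constraints~\eqref{eq:constraints} have already been folded into $\mathcal{H}$ through the Lagrangian $\mathcal{L}$ (barrier term $b$ for the inequalities $\vec{h}$, multipliers $\vec{\nu}$ for the equalities $\vec{g}$), so the pointwise minimization~\eqref{eq:hamiltonian_minimization} is effectively smooth and unconstrained near $\vec{u}^*$; stationarity at the minimizer then gives $\partial_{\vec{u}}\mathcal{H}(\vec{x},\vec{u}^*,t)=\vec{0}$. Next, the Weierstrass sufficient condition for a strong minimum entails the strengthened Legendre--Clebsch condition, i.e.\ $\partial_{\vec{u}}^2\mathcal{H}(\vec{x},\vec{u}^*,t)\succ 0$; by continuity of the Hessian there is a convex neighborhood $\mathcal{N}$ of $\vec{u}^*$ on which $\partial_{\vec{u}}^2\mathcal{H}(\vec{x},\vec{u},t)\succeq\delta\,\mat{I}$, with $\delta>0$ the infimum over $\mathcal{N}$ of the smallest eigenvalue of $\partial_{\vec{u}}^2\mathcal{H}$.

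The core step is then Taylor's theorem with the mean-value (equivalently, integral) form of the remainder: for any $\vec{\pi}$ whose connecting segment $[\vec{u}^*,\vec{\pi}]$ lies in $\mathcal{N}$ there is a point $\vec{\xi}$ on that segment with
\begin{align*}
\mathcal{H}(\vec{x},\vec{\pi},t) ={}& \mathcal{H}(\vec{x},\vec{u}^*,t) + \partial_{\vec{u}}\mathcal{H}(\vec{x},\vec{u}^*,t)\,(\vec{\pi}-\vec{u}^*) \\
&+ \tfrac{1}{2}(\vec{\pi}-\vec{u}^*)\transp \partial_{\vec{u}}^2\mathcal{H}(\vec{x},\vec{\xi},t)\,(\vec{\pi}-\vec{u}^*).
\end{align*}
The linear term vanishes by stationarity, and since $\vec{\xi}\in\mathcal{N}$ the quadratic term is bounded below by $\tfrac{\delta}{2}||\vec{\pi}-\vec{u}^*||^2$. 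Rearranging gives $||\vec{\pi}-\vec{u}^*||^2\leq\tfrac{2}{\delta}\big(\mathcal{H}(\vec{x},\vec{\pi},t)-\mathcal{H}(\vec{x},\vec{u}^*,t)\big)$, which is exactly~\eqref{eq:hamiltonian_gap_pointwise}.

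The delicate point — and the reason the statement speaks of a \emph{neighborhood} of $\vec{u}^*$ — is the locality: the estimate is valid only for $\vec{\pi}$ close enough to $\vec{u}^*$ that the whole segment stays inside $\mathcal{N}$, since otherwise the Hessian could fail to be uniformly positive definite along it. I would also be explicit that the argument uses $C^2$-smoothness in $\vec{u}$ of $l$, $\vec{f}$, $\vec{g}$ and of the barrier $b$, and note that replacing $\delta$ by any positive lower bound on the relevant eigenvalues only loosens the bound and is therefore harmless.
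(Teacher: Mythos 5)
Your proof is correct and follows essentially the same route as the paper: the paper's double application of the Fundamental Theorem of Calculus is exactly the integral form of the second-order Taylor remainder you invoke, with the linear term killed by stationarity and the Hessian bounded below by $\delta \vec{I}$ near $\vec{u}^*$. Your explicit remarks on locality of the bound and the required smoothness are consistent with (indeed slightly more careful than) the paper's treatment.
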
%
\begin{proof}
    The proof is provided in Appendix~\ref{sec:optimality_proof}.
\end{proof}%
}%
This statement implies that $\vec{\pi}(t, \vec{x} | \vec{\theta}^*)$ is approaching the optimal control for a specific ${(t, \vec{x})}$ pair as the optimality gap in $\mathcal{H}$ is reduced.
It is unrealistic to minimize this gap for every point in state-space simultaneously because that would require an extremely flexible parametrization of $\vec{\pi}$ and would also assume that $\mathcal{H}$ is known at every state.

To our benefit, however, \ac{SLQ} computes the value function along trajectories in state space.
These trajectories induce a distribution over time and state $\{t,\vec{x}\} \sim \mathcal{P}$ that encodes which areas of the time-state-space are visited by an optimal controller.
For our purposes, it is, therefore, sufficient to minimize the optimality gap almost everywhere with respect to $\mathcal{P}$.
Taking the expectation of~\eqref{eq:hamiltonian_gap_pointwise} gives
\begin{equation}
    \E\limits_{\mathcal{P}} \Big[ || \vec{\pi} - \vec{u}^* ||^2 \Big]
    \leq \frac{2}{\delta}
    \E\limits_{\mathcal{P}} \Big[ \mathcal{H}(\vec{x}, \vec{\pi}, t) - \mathcal{H}(\vec{x}, \vec{u}^*, t) \Big] \; ,
\end{equation}
Restricting the minimization only to the relevant states (i.e., those with nonzero probability mass) also allows us to invoke the Universal Approximation Theorem for our parameterized policy:
We can expect to find a $\vec{\theta}^*$ that makes the expected optimality gap in $\mathcal{H}$ arbitrarily small if the function class $\vec{\pi}(t, \vec{x} | \vec{\theta})$ is sufficiently rich.
Our strategy for finding the optimal parameters is, therefore, given by
\begin{equation}
\vec{\theta}^* = \arg \min\limits_{\vec{\theta}} \; \E\limits_{\{t,\vec{x}\} \sim \mathcal{P}} \Big[ \mathcal{H}(\vec{x},\vec{\pi}(t, \vec{x} | \vec{\theta}),t) \Big] \; . \label{eq:policy_loss}
\end{equation}
The quantity in the expectation \eqref{eq:policy_loss} can be seen as a per-sample loss for policy training.
It is essential to realize that the control Hamiltonian allows us to find the optimal control via this unconstrained minimization because the future cost and constraint Lagrangian have already been included.
It is, therefore, not necessary to perform Monte-Carlo-style rollouts to find the optimal control.

The \ac{MPC} loop presented in Sec.~\ref{sec:MPC} serves as a data generation mechanism for the policy search module.
In general terms, the \ac{MPC} fills a replay buffer with data points that correspond to the states that it has encountered, and those tuples are sampled from to compute the empirical expectation in~\eqref{eq:policy_loss}.
In our implementation, the samples for computing the policy gradient are drawn uniformly at random from the replay buffer in order to break their temporal correlation~\cite{Lin92}.
\subsection{Augmenting Samples Using the Optimal Solution}
A favorable property of \ac{SLQ}, being a local dynamic programming approach, is that it computes a second-order approximation of the optimal value function, as well as a first-order approximation of the Lagrange multiplies and control policy, in the vicinity of the optimal state-input trajectories.
In turn, the control Hamiltonian can also be approximated in a region around the optimal solution.
Our numerical investigation in Sec.~\ref{sec:results_loss_function} demonstrates that this approximated Hamiltonian still yields an acceptable estimation of the optimal control input for samples neighboring the MPC trajectories.
This observation motivates us to extract samples not only from the MPC generated trajectories but also from a tube surrounding these trajectories, which further improves the sample complexity of the approach.

By initializing the control problem \eqref{eq:cost}, \eqref{eq:constraints} at feasible, random starting points, the areas where the value function is known corresponds to the subset of states that are visited by a (close-to) optimal policy.
This fact can be exploited to increase the extracted informational content from a rollout of the optimal policy.
By sampling around the nominal state, our data automatically covers tubes in state space, which accelerates learning and makes the learned policy more robust.
This procedure, denoted sampleInNeighborhood in Alg.~\ref{alg:policy_learning}, amounts to drawing states from a Gaussian distribution according to
\begin{equation}
\vec{x} \sim \mathcal{N}(\vec{x}_{\text{nom}}, \Sigma_{\vec{x}}),
\end{equation}
where the covariance matrix has diagonal entries corresponding to the typical disturbance that the respective state component may encounter.
The sampling idea is conceptually similar to fitting the tangent space of the demonstrator policy instead of just the nominal control command~\cite{Mordatch14}.

\subsection{Addressing Distribution Mismatch}
Unfortunately, despite our efforts to extract samples from trajectories that cover a large volume in state space, there is still a bias of the state distribution towards those states that are encountered by the optimal policy.
This distribution mismatch is a common problem in \ac{IL} and stems from the fact that a learned controller produces inevitably different control inputs than the demonstrator (even when fully converged, unstable physical systems may amplify small differences), which will eventually drive the system into an area of the state space from which no data is available.
To avoid this scenario, we use a behavioral policy $\vec{\pi}_{\mathcal{B}}(\cdot)$ to push the emulated \ac{MPC} loop towards the states that will be seen by the learned policy.
Taking inspiration from Dagger~\cite{Ross11}, the update rule for the next state (stepSystem method in Alg.~\ref{alg:policy_learning}) is given by
\begin{align}
\vec{x}(t + \Delta t) &= \vec{x}(t) + \vec{f}(t, \vec{x}, \vec{\pi}_{\mathcal{B}}(t, \vec{x}, \vec{\theta})) \Delta t , \\
\vec{\pi}_{\mathcal{B}}(t, \vec{x}, \vec{\theta}) &= (1-\alpha) \vec{\pi}_{\text{mpc}} + \alpha \vec{\pi}(t, \vec{x} | \vec{\theta}) ,
\end{align}
where the mixing parameter $\alpha$ is initially zero and linearly increases with the number of iterations until it has reached one in the final iteration.
Through this process, the learned policy is gradually given more responsibility to decide where the \ac{MPC} algorithm should be applied.
It is important to note that the \ac{MPC} solver is not influenced by the learned policy and produces optimal solutions independent of the value of~$\alpha$.
\subsection{Policy Structure and Training}
Now that the loss function and a way to populate our experience buffer is defined, we turn the actual training procedure and computation of stochastic gradients of our policy.

In this work, we use a mixture-of-experts architecture~\cite{Jacobs91} for the control policy, shown in Fig.~\ref{fig:expert_mix_net}.
Allowing multiple policies $\vec{\pi}_i$ to compete naturally handles the non-uniqueness of the \ac{OC} solution.
For example, passing an obstacle around the left or right side may be an equally good choice that two different experts will try to imitate, but forcing a monolithic network to interpolate between these solutions can be catastrophic.
\begin{figure}
    \centering
    \includegraphics[width=0.82\columnwidth]{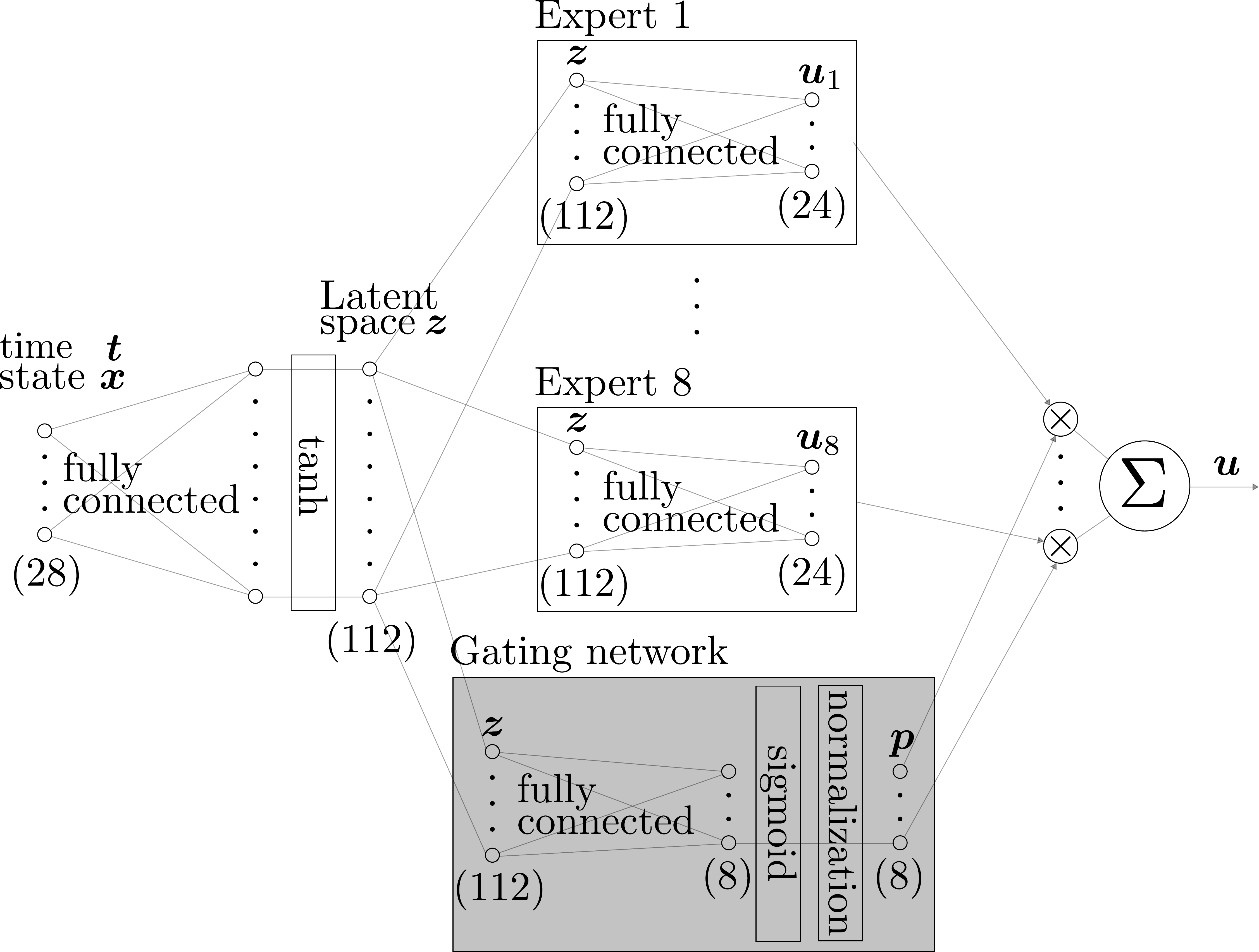}
    \caption{Architecture of our mixture-of-experts network. The dimensions correspond to the instantiation for the ANYmal robot.}
    \label{fig:expert_mix_net}
\end{figure}

The final control output of the network is a convex combination of the outputs of different expert sub-policies
\begin{equation}
\vec{\pi}(t, \vec{x} | \vec{\theta}) = \sum\limits_{i=1}^{\text{NumExperts}} p_i(t, \vec{x} | \vec{\theta}) \, \vec{\pi}_i(t, \vec{x} | \vec{\theta}) \; . \label{eq:expert_net}
\end{equation}
The mixing coefficients $p_i$ are the output of a gating network whose final activation ensures that all coefficients are positive and sum up to one.
While a softmax layer achieves this constraint, we find that a sigmoid activation with subsequent normalization performs better in selecting a consistent number of experts for a given task across multiple training runs.
We believe the reason for this observation is that the softmax activation is too sharp in selecting one specific expert such that an unlucky initialization may lead to some experts never even being considered and therefore not receiving policy updates.

Both the expert sub-policies and the gating network share a common latent space representation.
The overall policy~\eqref{eq:expert_net} remains a feed-forward neural network and can, therefore, be trained with standard deep learning optimization techniques:
At each policy iteration step, we draw a batch of $\{t, \vec{x},\partial_{\vec{x}} V, \vec{\nu}\}$ tuples from the replay buffer and compute the empirical loss for this batch as
\begin{align}
\text{loss} =  \sum\limits_{j=1}^{\text{BatchSize}} \sum\limits_{i=1}^{\text{NumExperts}} p_i(t_j, \vec{x}_j | \vec{\theta})  \mathcal{H}(\vec{x}_j,\vec{\pi}_i(t_j, \vec{x}_j | \vec{\theta}),t_j) \; . \label{eq:empirical_batch_loss}
\end{align}
Note that we force each experts' output to individually minimize the Hamiltonian to encourage specialization~\cite{Jacobs91}.
This procedure is slightly different from inserting~\eqref{eq:expert_net} into~\eqref{eq:policy_loss}, which would only encourage their combined output to be optimal.
Training the optimal policy involves taking gradient steps in the parameter space.
The policy gradient for the loss function~\eqref{eq:empirical_batch_loss} for a given sample $j$ is equal to
\begin{align}
\sum\limits_{i=1}^{N_{\text{experts}}}
    &p_i(t_j, \vec{x}_j | \vec{\theta}) \partial_{\vec{u}} \mathcal{H}(\vec{x}_j,\vec{\pi}_i(t_j, \vec{x}_j | \vec{\theta}),t_j) \partial_{\vec{\theta}}\vec{\pi}(t_j, \vec{x}_{i} | \vec{\theta})
    \nonumber \\
    &+ \partial_{\vec{\theta}} p_i(t_j, \vec{x}_j | \vec{\theta}) \mathcal{H}(\vec{x}_j,\vec{\pi}_i(t_j, \vec{x}_j | \vec{\theta}),t_j) .
\end{align}
For all nominal states the control derivative of the Hamiltonian $\partial_{\vec{u}} \mathcal{H}$ is computed as a byproduct of solving the problem \eqref{eq:cost}, \eqref{eq:constraints};
for neighboring states the derivative of~\eqref{eq:hamiltonian} can be queried.
The gradients of $p$ and $\vec{\pi}$ are calculated by backpropagation.
%
%
\section{Results}
We assess the policy structure and loss function of the MPC-Net algorithm separately to highlight the performance of our method and justify individual design choices.
\subsection{Experimental Setup}
The results presented in this document are produced with the quadrupedal robot ANYmal (Fig.~\ref{fig:anymal}), which is an example of a hybrid system with time-varying flow map and constraints.
The constraints encode zero contact forces for a foot in swing phase and zero velocity when in stance phase.
\begin{figure}
    \centering
    \includegraphics[width=0.5\columnwidth]{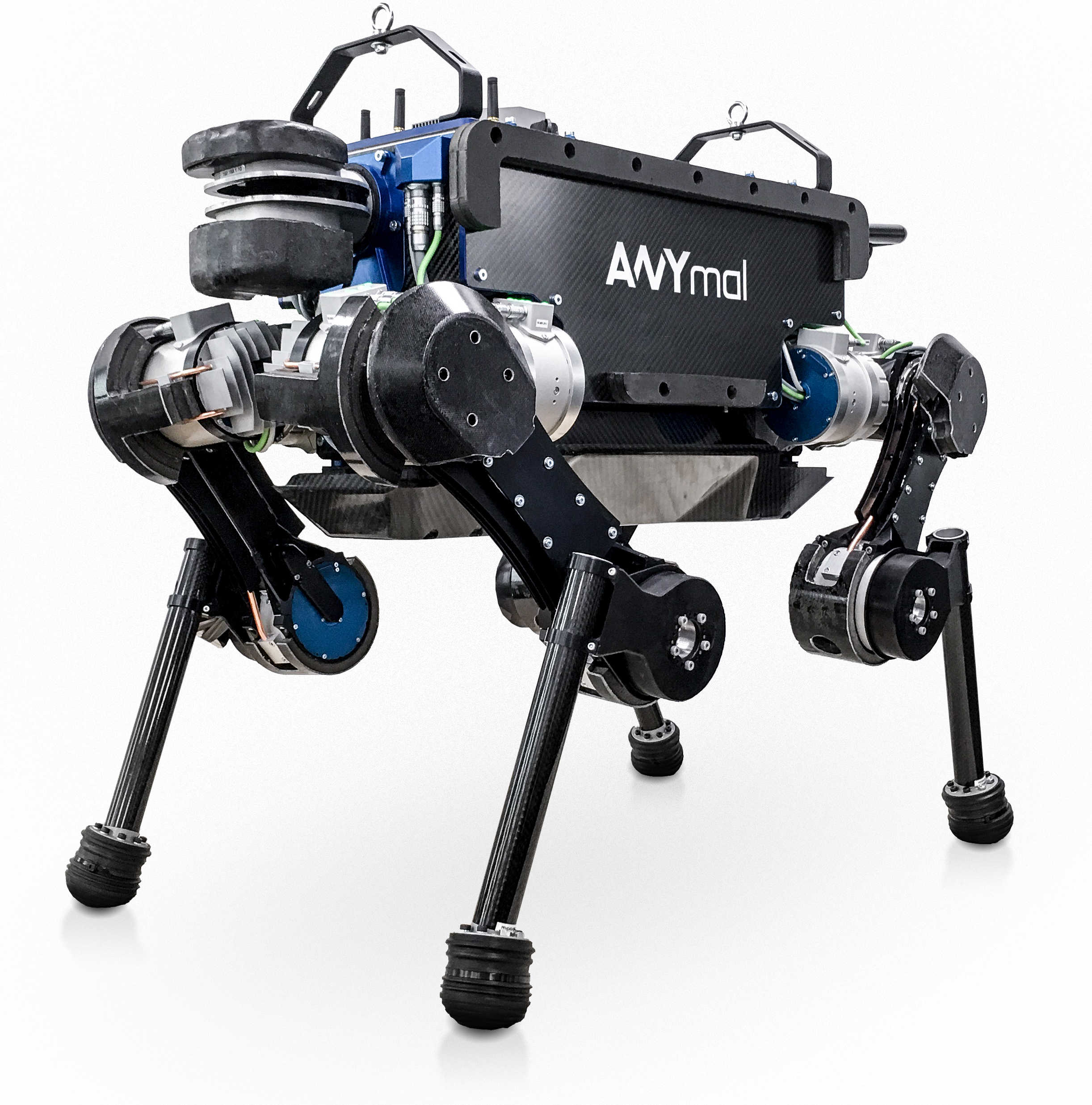}
    \caption{The quadrupedal robot ANYmal. The floating base and three joints per leg amount to 18 \ac{DOF}. Our kinodynamic model of this robot has 24 states and 24 inputs.}
    \label{fig:anymal}
\end{figure}
Our kinodynamic model amounts to 24 states (base pose, base twist, joint angles) and 24 control inputs (joint velocities, foot contact forces).
The control commands from our policy are fed to a whole-body tracking controller that computes the final actuator torque commands.
Instead of providing the absolute time to the network, it is more expedient to encode the phase of the gait cycle of the legged robot.
By abuse of notation, we, therefore, define four `time' variables, one per leg, which are zero during stance phases and describe half a period of a sine wave during the swing motion.

We use a quadratic \ac{OC} cost function~\eqref{eq:cost} of the form
\begin{align}
\Phi(\vec{x}) &= (\vec{x} - \vec{x}_{\text{ref,}f})^\top \mat{Q}_f (\vec{x} - \vec{x}_{\text{ref,}f}) \; , \\
l(\vec{x},\vec{u},t) &= (\vec{x}-\vec{x}_\text{ref}(t))^\top \mat{Q}\, (\vec{x}-\vec{x}_\text{ref}(t)) + \vec{u}^\top \mat{R} \vec{u} .
\end{align}
The reference states encourage the system to return to the origin with a trotting or static walk gait and then maintain a nominal configuration.
Our quadratic cost structure, together with the fact that our constraints and dynamics are input-affine, makes the Hamiltonian a quadratic function in $\vec{u}$.
Notably, this guarantees that all assumptions made for the proof of Lemma~\ref{lem:hamiltonian_bound} are fulfilled.

Since our loss function~\eqref{eq:empirical_batch_loss} directly depends on the sampled data, it is not a suitable termination criterion for the training process and has a high variance.
We monitor the training progress of our policy by computing a rollout of the system dynamics $\vec{f}(\cdot)$ with the learned policy from random initial points.
A rollout lasts \unit[3]{s} but is terminated early if the pitch or roll angle exceed \unit[30]{$^\circ$} or the height deviates more than \unit[20]{cm} from the nominal value.
This procedure can be seen as a test set for our learning approach.
The resulting average rollout cost~\eqref{eq:cost} and the survival time are good performance indicators for the policy.

All hyper-parameters of our algorithm are summarized in Tab.~\ref{tab:hyperparameters}.
The network weights are randomly initialized before training and optimized with the AMSGrad variant of the Adam optimizer~\cite{Kingma14, Reddi18}, which implements the stepOptimizer primive in Alg.~\ref{alg:policy_learning}.
We take the data from \ac{MPC} as is without any pruning of failed rollouts or outlier states.
For the following comparisons, we execute five training runs for each configuration and average the results.
For better interpretability, the progression of training is shown in terms of the total duration of accumulated rollouts rather than (linearly related) optimizer iterations.
\begin{table}
    \centering
    \caption{Hyperparameters of MPC-Net}
    \begin{tabularx}{0.9\columnwidth}{rl||rl}
        \toprule
        maxIter& 100'000 & mpcDecimation & 500\\
        rolloutLength & \unit[3]{s} & Replay Buffer Size & 100'000\\
       time step $\Delta t$ & \unit[0.0025]{s} & $N_\text{experts}$ & 8\\
        learningRate & 1e-3 & batchSize & 32 \\
        \bottomrule
    \end{tabularx}
    \label{tab:hyperparameters}
\end{table}
\subsection{Loss Function}\label{sec:results_loss_function}
We begin by providing numerical evidence that minimizing the control Hamiltonian $\mathcal{H}$ yields optimal controls.
To this end, we compare the optimal policy from \ac{MPC}~\eqref{eq:mpc_feedback_law} with the result of the minimization~\eqref{eq:hamiltonian_minimization}.
Table~\ref{tab:hamiltonian_minimization} shows median constraint violation and relative deviation from the optimal input for 40 randomly drawn points on or near optimal trajectories.
For query states~$\tilde{\vec{x}}$ near the optimal trajectory the benchmark control $\vec{u}^*$ is computed by solving~\eqref{eq:cost} for $\vec{x}_0 = \tilde{\vec{x}}$.
The values confirm that our estimation of $\mathcal{H}$ is sufficiently accurate and that its minimization produces constraint-satisfactory control commands.
Additionally, this result suggests that the requirement for Lemma~\ref{lem:hamiltonian_bound} (i.e., the existence of a strong minimum) also holds for states in the vicinity of an optimal trajectory.
\begin{table}
    \centering
    \caption{Comparison between the MPC policy and Hamiltonian minimization. We show median constraint violation $||\vec{g}||$ and the relative error to the optimal control $\vec{u}^*$}
    \begin{tabularx}{0.9\columnwidth}{r|ll|ll}
         & \multicolumn{2}{c}{states on opt. trajectory} & \multicolumn{2}{c}{states near opt. trajectory} \\
         &  $||\vec{g}||$ & $\frac{||\vec{u} - \vec{u}^*||}{||\vec{u}||}$ & $||\vec{g}||$ & $\frac{||\vec{u} - \vec{u}^*||}{||\vec{u}||}$ \\
        \toprule
       $\vec{\pi}_\text{mpc}$ & 3.44e-6 & 0.0 & 3.58e-4 & 2.48e-2 \\
       $\arg \min \mathcal{H}$ & 3.46e-4 & 1.58e-3 & 5.40e-4 & 2.80e-2 \\
        \bottomrule
    \end{tabularx}
    \label{tab:hamiltonian_minimization}
\end{table}

\subsection{Comparison to Behavior Cloning}
The next experiment compares our proposed Hamiltonian~\eqref{eq:policy_loss} as a loss function with a simpler \ac{BC} loss that encourages matching of the demonstrator's control command
\begin{equation}
\vec{\theta}^* = \arg \min\limits_{\vec{\theta}} \E\limits_{\{t,\vec{x}\} \sim \mathcal{P}} || \vec{\pi}_{\text{mpc}}(t, \vec{x}) - \vec{\pi}(t, \vec{x} | \vec{\theta}) ||_{\mat{R}} \; . \label{eq:bc_loss}
\end{equation}
We use the control cost matrix $\mat{R}$ here to normalize the different control dimensions.
We see in Fig.~\ref{fig:hamiltonian_vs_imitation_learning} that the simpler loss \eqref{eq:bc_loss} results in similar convergence to a stable control law, but the Hamiltonian loss consistently achieves a lower constraint violation value.
Constraint violation means that physical feasibility is violated, effectively allowing the robot to stabilize by cheating.
When deployed in a physics simulator, the policy trained on \eqref{eq:bc_loss} tends to fall after a few footsteps as violations errors accumulate because cheating is not possible anymore.

We conjecture that the structure of the Hamiltonian, which includes constraint violation penalties explicitly, encourages the learning algorithm to respect constraints more carefully than in the case of only observing constraint-consistent demonstrations.
Note that our loss would inform the learner about constraint violations even if the demonstrations violated them.
\begin{figure}
    \centering
    \includegraphics[width=\columnwidth]{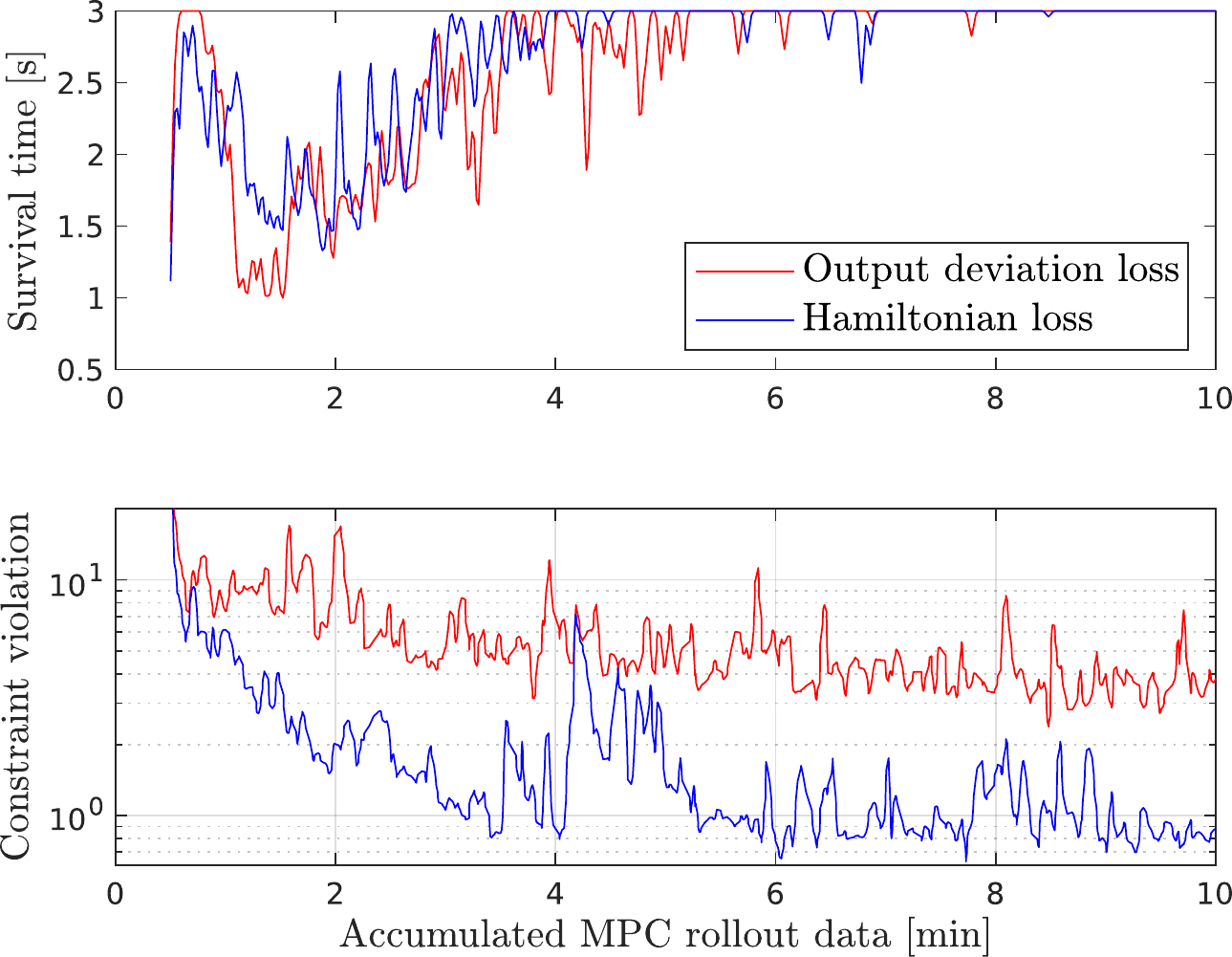}
    \caption{Comparison between minimization of the control Hamiltonian and a simpler loss penalizing differences in policy outputs.
    Both loss functions are applied to the mixture-of-expert network architecture.}
    \label{fig:hamiltonian_vs_imitation_learning}
\end{figure}
\subsection{Sample Efficiency}
We show in Fig.~\ref{fig:sampling_around_trajectory} how sampling around the optimal trajectory influences the learning process for a quadruped walking motion.
There is no noticeable effect in the loss function (i.e., the value of the Hamiltonian) throughout the process, which also suggests that this value is not a good indicator for the actual performance of the policy.
Instead, a clear effect can be seen in the progression of the survival time.
The plot suggests that the additionally sampled states provide valuable information for the training algorithm to learn faster and stabilize the system more consistently at the end of the training.
More importantly even, we observe that the policy that is trained only on nominal samples is overly aggressive to small deviations in the system's state.
These strong gains lead to oscillatory behavior when deployed on the real system, where sensors and the state estimator inevitably introduce noise.
Subsequently, only the policy that is trained with additional samples around the optimal trajectory is robust and smooth enough to stabilize the system under noisy state estimates.
Evidence of this result is shown in the video\footnote{\texttt{https://youtu.be/VI7wt5PCJ14}}.

Finally, experiments show that the policies with sampling become usable on the robot at approximately 75\% of the maximum number of iterations, indicating that sampling also improves the effective amount of information extracted from demonstrated trajectories and thereby necessitating fewer \ac{MPC} calls.
Our algorithm, therefore, learns to stabilize a walking robot from an experience buffer that is equivalent to running the robot for nine minutes with an optimal controller.
Notably, this time scale opens up the possibility of learning directly on a real system.
\begin{figure}
    \centering
    \includegraphics[width=0.95\columnwidth]{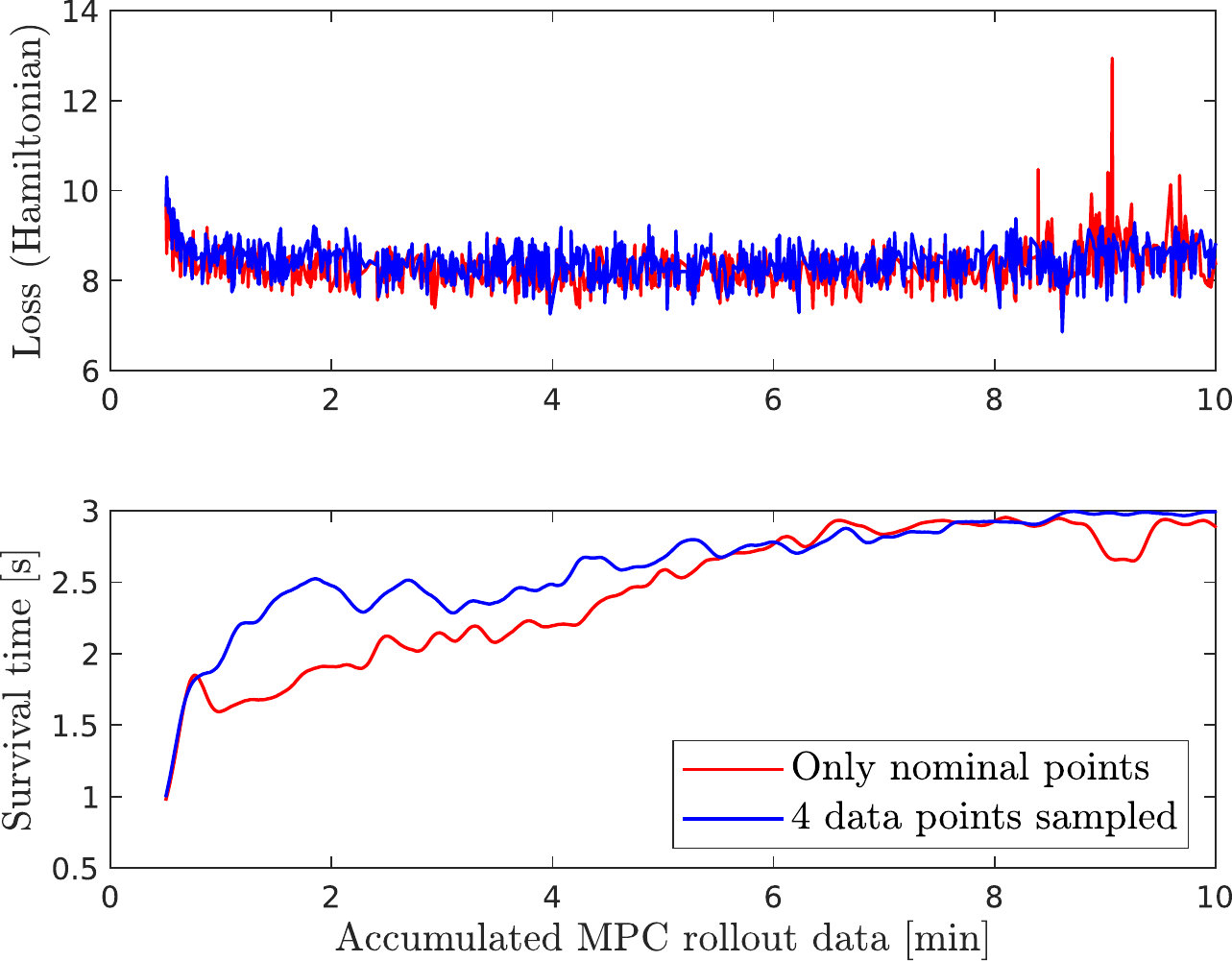}
    \caption{Effect of collecting additional samples around the optimal trajectory. The maximum duration of a policy rollout is \unit[3]{s}. Five independent experiments are averaged for each setting.}
    \label{fig:sampling_around_trajectory}
\end{figure}
\subsection{Mixture-of-Expert Architecture}
In this experiment we compare the performance of our mixture-of-expert architecture to a classical two-layer \ac{MLP}
\begin{equation}
\vec{\pi}_{\text{MLP}} = \mat{A}_2(\tanh(\mat{A}_1 \vec{x} + \vec{b}_1)) + \vec{b}_2 \; ,
\end{equation}
with an equally-sized latent space than the one of the expert mixture.\footnote{
We also tested deeper and wider \ac{MLP} architectures but could not observe improved performance.} While both architectures achieve similar convergence to a stable controller,
Fig.~\ref{fig:expert_net} shows that the expert mixture reaches a significantly better constraint violation score.

We allow the expert mixture network to use 8 experts for training.
Interestingly, the gating network decides to use fewer experts, and swiching between these sub-policies happens precisely at the times when the contact configuration of the system changes.
For a trotting gait, only three experts are needed (blue expert for the first pair of diagonal legs, a mixture of red and black for the other pair, and red for the final stance phase) while a static walk selects four experts, one per swing leg.
This result shows that the policy learns to select an appropriate expert in different domains of the state space.
Moreover, a specialized expert that focuses only on a specific contact configuration learns to obey the constraints better than a single policy for all phases of the gait.
\begin{figure}
    \centering
    \includegraphics[width=\columnwidth]{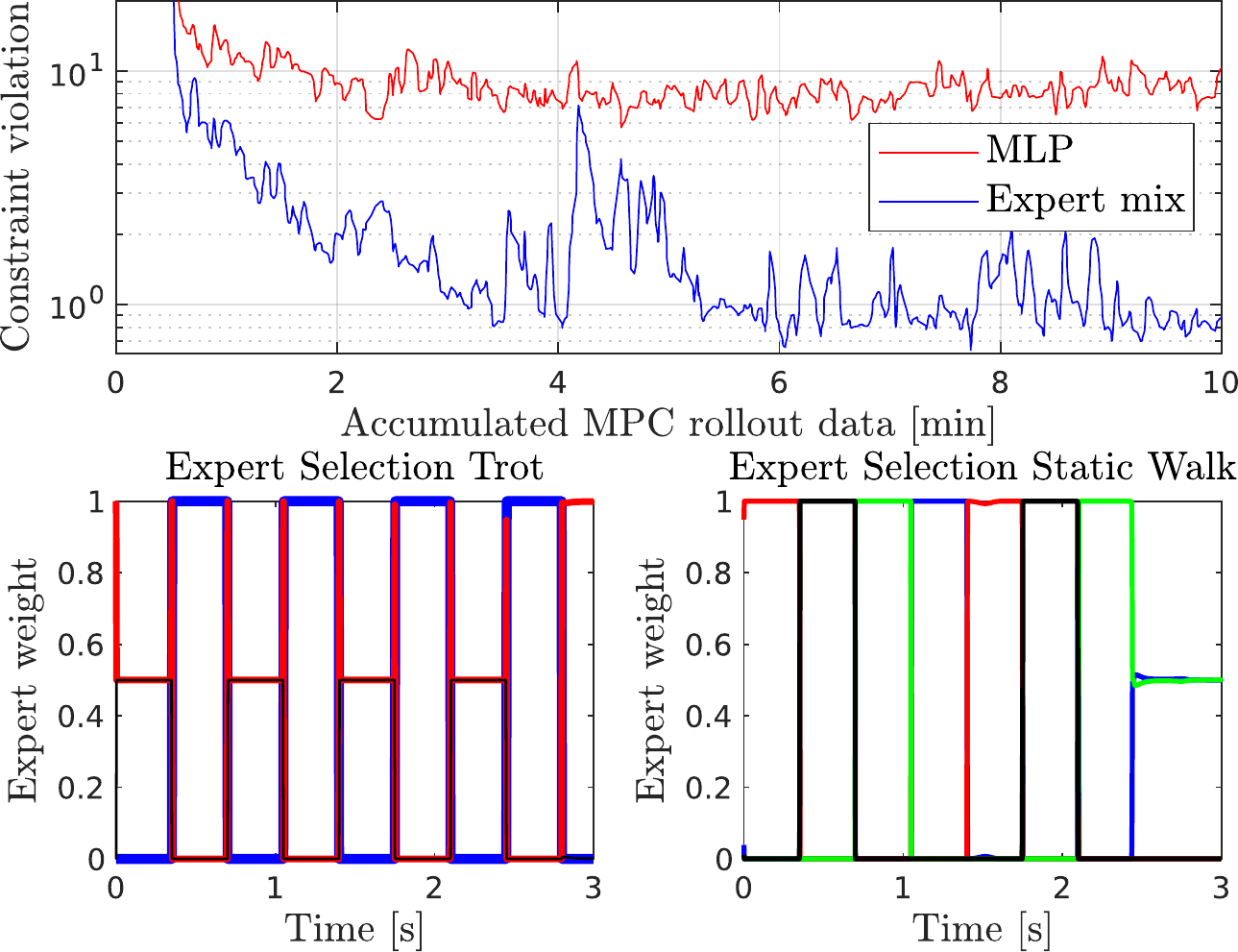}
    \caption{The top graph shows a comparison of constraint violation during training between the expert mixture network and a \ac{MLP} of equivalent size. The bottom two graphs display the output of the expert gating network for two different gaits (one color per expert). Switching times correspond exactly to changes in the contact configuration and the pattern repeats periodically with the period of the gait.}
    \label{fig:expert_net}
\end{figure}
\subsection{Robot Control}
Finally, we test our trained policy on the physical ANYmal robot.
The on-board policy evaluation takes approx. \unit[0.125]{ms}, compared to \unit[38]{ms} of an MPC update, and can therefore be called synchronously to the tracking controller.
We verify that both a trotting and a static walk gait can be learned from the \ac{MPC} oracle using the same network structure and identical hyperparameters.
Despite the seemingly more stable static walk, both gaits pose a comparable level of difficulty to the learning algorithm which manifests in similar convergence properties.
The attached video shows the robot's behavior under our learned policy.

We test the policy's ability to return to the origin by starting the robot at a nonzero initial displacement and yaw rotation.
In Figure~\ref{fig:anymal_return_zero}, we plot the resulting state trajectories of $x$-$y$ position as well as yaw angle, confirming that the network succeeds in the regularization task without overshoot.
\begin{figure}
    \centering
    \includegraphics[width=0.95\columnwidth]{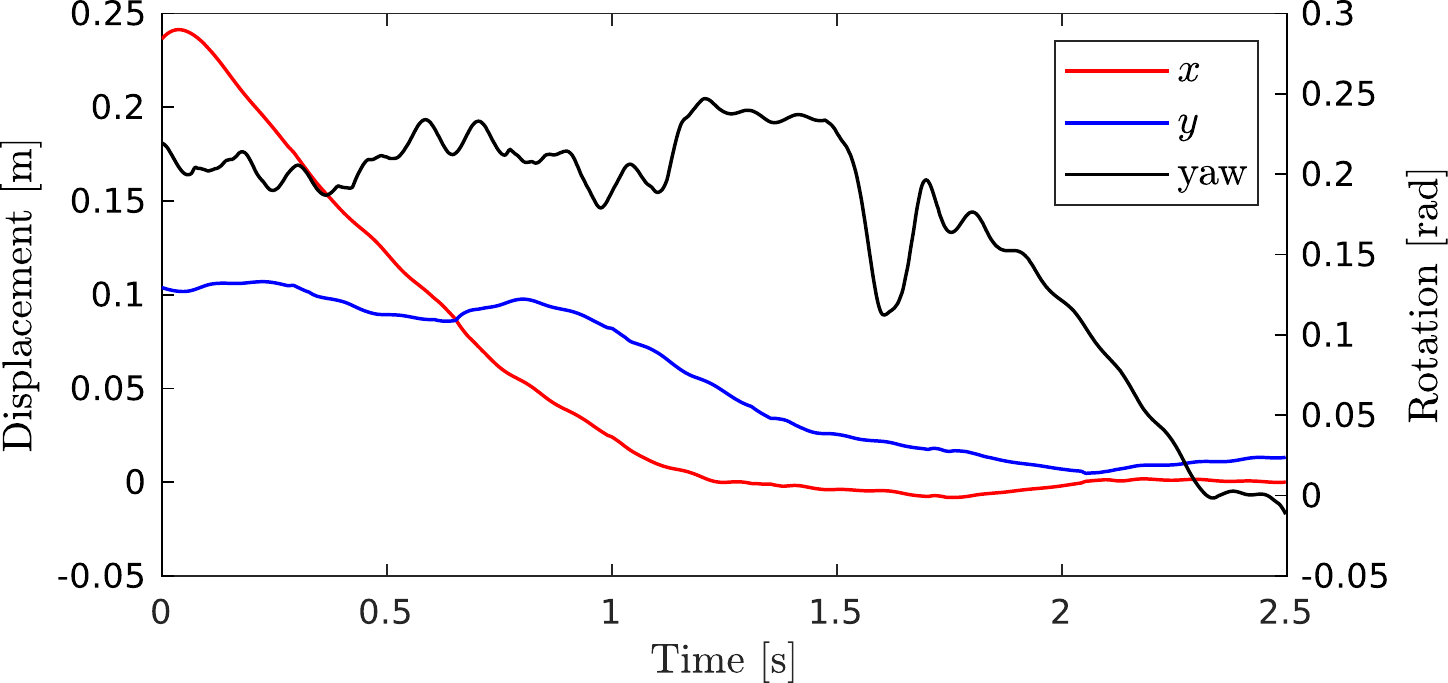}
    \caption{Time evolution of ANYmal's base position and yaw angle under the trained policy. All quantities return to zero with minimal overshoot.}
    \label{fig:anymal_return_zero}
\end{figure}
%
%
\section{Conclusion}
In this work, we explored a variant of \ac{MPC}-guided policy search to learn a feedback control law.
Contrary to other imitation learning approaches, which try to mimic the control commands of a teacher, our formulation is based on minimizing the control Hamiltonian.
The optimization corresponds to solving the \ac{OC} problem with a restricted family of control laws.
We show that our algorithm is capable of learning a feedback policy for two different gaits of a walking robot from less than 10 minutes of demonstration data.

By design, our method cannot outperform the \ac{MPC} policy because it optimizes the same cost function, and we cannot learn in areas where the optimization algorithm does not converge.
However, the improved speed in control evaluation may very well stabilize motions that were not possible before or enable online control altogether.
Even if the \ac{MPC} algorithm is too slow to stabilize the robot, the sample efficiency of our methods facilitates learning directly on the hardware.
To this end, one may compute the MPC solution online on a powerful, off-board machine.

Online \ac{MPC} may also be too energy-consuming for longer autonomous operation, in which case MPC-Net could control the robot by default and \ac{MPC} is only queried as soon as unknown states are encountered.
Such an operating mode would ensure the safety of the system while generating more training data in regions of the state-space that are still uncertain.

A limitation related to imitating optimized trajectories is the lack of exploration, as our policy search method will fall into the same local minima that the \ac{MPC} optimizer found.
Future research is necessary to investigate how policies could systematically request new samples from the \ac{MPC} to improve in areas where the optimal control is still uncertain.
%
\revision{
\section{Appendix: Proof to Lemma~\ref{lem:hamiltonian_bound}}\label{sec:optimality_proof}
Let $\vec{p} := \vec{\pi} - \vec{u}^*$ be the difference between an arbitrary policy $\vec{\pi}$ and the optimal controls $\vec{u}^*$ for a single time and state.
We define $\bar{\mathcal{H}}: [0, 1] \rightarrow \mathbb{R}$ as
\begin{equation}
\bar{\mathcal{H}}(\alpha) = \mathcal{H}(\vec{x}, \vec{u}^* + \alpha \vec{p}, t) \; ,
\end{equation}
where $\alpha$ is an interpolation parameter.
Assuming second order differentiability of $\mathcal{H}$,
the Fundamental Theorem of Calculus allows us to write
\begin{equation} \label{eq:FTC_Hamiltonian_1}
\bar{\mathcal{H}}(1) - \bar{\mathcal{H}}(0) = \int_{0}^{1} \partial_{\vec{u}} \bar{\mathcal{H}}(\alpha) \, \vec{p}  \de \alpha \;.
\end{equation}
Applying the same idea again to the integrand, we get
\begin{equation} \label{eq:FTC_Hamiltonian_2}
\partial_{\vec{u}} \bar{\mathcal{H}}(\alpha) - \underbrace{\partial_{\vec{u}} \bar{\mathcal{H}}(0)}_{= \mathbf{0}}
= \int_{0}^{\alpha} \vec{p}\transp \partial_{\vec{u}}^2 \bar{\mathcal{H}}(\beta) \de \beta \; ,
\end{equation}
with $\partial_{\vec{u}}^2 \bar{\mathcal{H}}$ denoting the Hessian matrix w.r.t. $\vec{u}$.
The second term above vanishes because $\vec{u}^*$ is optimal.
Now we can substitute~\eqref{eq:FTC_Hamiltonian_2} into~\eqref{eq:FTC_Hamiltonian_1}, resulting in
\begin{equation}
\bar{\mathcal{H}}(1) - \bar{\mathcal{H}}(0) = \vec{p}\transp
\left(\int_{0}^{1} \!\! \int_{0}^{\alpha}  \partial_{\vec{u}}^2 \bar{\mathcal{H}}(\beta) \de \beta \de \alpha \right) \vec{p} \; .
\end{equation}
%
Assuming that $\vec{u}^*$ is a strong minimum which satisfies the Weierstrass sufficient condition, the Hessian of the Hamiltonian is positive definite in the neighborhood of the optimal input.
Thus for small enough $\vec{p}$, there exists a positive scalar $\delta > 0$ for which
\begin{equation}
\partial_{\vec{u}}^2 \bar{\mathcal{H}}(\beta) > \delta \vec{I}, \text{ for all $\beta \in [0, 1]$}
\end{equation}
We then have,
\begin{equation}
\vec{p}\transp
\left(\int_{0}^{1} \!\! \int_{0}^{\alpha}  \partial_{\vec{u}}^2 \bar{\mathcal{H}}(\beta) \de \beta \de \alpha \right) \vec{p} > \frac{\delta}{2} \, \vec{p}\transp\vec{p} \, .
\end{equation}
Replacing the left side of the inequality with ${\bar{\mathcal{H}}(1) - \bar{\mathcal{H}}(0)}$ and recalling the definition of $\vec{p}$ yields the statement of Lemma~\ref{lem:hamiltonian_bound}.
\qed
}

%
%
\bibliographystyle{IEEEtran}
\bibliography{sources} 

\begin{thebibliography}{10}
\providecommand{\url}[1]{#1}
\csname url@samestyle\endcsname
\providecommand{\newblock}{\relax}
\providecommand{\bibinfo}[2]{#2}
\providecommand{\BIBentrySTDinterwordspacing}{\spaceskip=0pt\relax}
\providecommand{\BIBentryALTinterwordstretchfactor}{4}
\providecommand{\BIBentryALTinterwordspacing}{\spaceskip=\fontdimen2\font plus
\BIBentryALTinterwordstretchfactor\fontdimen3\font minus
  \fontdimen4\font\relax}
\providecommand{\BIBforeignlanguage}[2]{{%
\expandafter\ifx\csname l@#1\endcsname\relax
\typeout{** WARNING: IEEEtran.bst: No hyphenation pattern has been}%
\typeout{** loaded for the language `#1'. Using the pattern for}%
\typeout{** the default language instead.}%
\else
\language=\csname l@#1\endcsname
\fi
#2}}
\providecommand{\BIBdecl}{\relax}
\BIBdecl

\bibitem{Tan18}
J.~Tan, T.~Zhang, E.~Coumans, A.~Iscen, Y.~Bai, D.~Hafner, S.~Bohez, and
  V.~Vanhoucke, ``Sim-to-real: Learning agile locomotion for quadruped
  robots,'' in \emph{Robotics: Science and Systems XIV}, 2018.

\bibitem{Iscen18}
A.~Iscen, K.~Caluwaerts, J.~Tan, T.~Zhang, E.~Coumans, V.~Sindhwani, and
  V.~Vanhoucke, ``Policies modulating trajectory generators,'' in \emph{Conf.
  on Robot Learning (CoRL)}, 2018, pp. 916--926.

\bibitem{Haarnoja18}
T.~Haarnoja, S.~Ha, A.~Zhou, J.~Tan, G.~Tucker, and S.~Levine, ``Learning to
  walk via deep reinforcement learning,'' in \emph{Robotics: Science and
  Systems XV}, 2019.

\bibitem{Hwangbo19}
J.~Hwangbo, J.~Lee, A.~Dosovitskiy, D.~Bellicoso, V.~Tsounis, V.~Koltun, and
  M.~Hutter, ``Learning agile and dynamic motor skills for legged robots,''
  \emph{Science Robotics}, vol.~4, no.~26, 2019.

\bibitem{Xie19}
Z.~Xie, P.~Clary, J.~Dao, P.~Morais, J.~W. Hurst, and M.~van~de Panne,
  ``Iterative reinforcement learning based design of dynamic locomotion skills
  for cassie,'' \emph{CoRR}, vol. abs/1903.09537, 2019.

\bibitem{Osa18}
T.~Osa, J.~Pajarinen, G.~Neumann, J.~A. Bagnell, P.~Abbeel, and J.~Peters, ``An
  algorithmic perspective on imitation learning,'' \emph{Foundations and Trends
  in Robotics}, vol.~7, no. 1-2, pp. 1--179, 2018.

\bibitem{Sun17}
W.~Sun, A.~Venkatraman, G.~J. Gordon, B.~Boots, and J.~A. Bagnell, ``Deeply
  aggrevated: Differentiable imitation learning for sequential prediction,'' in
  \emph{Int. Conf. on Machine Learning {ICML}}, 2017, pp. 3309--3318.

\bibitem{Park15}
H.~Park, P.~M. Wensing, and S.~Kim, ``Online planning for autonomous running
  jumps over obstacles in high-speed quadrupeds,'' in \emph{Robotics: Science
  and Systems XI}, 2015.

\bibitem{Naveau17}
M.~{Naveau}, M.~{Kudruss}, O.~{Stasse}, C.~{Kirches}, K.~{Mombaur}, and
  P.~{Souères}, ``A reactive walking pattern generator based on nonlinear
  model predictive control,'' \emph{IEEE Robotics and Automation Letters},
  vol.~2, no.~1, pp. 10--17, 2017.

\bibitem{Farshidian17MPC}
F.~{Farshidian}, E.~{Jelavic}, A.~{Satapathy}, M.~{Giftthaler}, and
  J.~{Buchli}, ``Real-time motion planning of legged robots: A model predictive
  control approach,'' in \emph{IEEE-RAS Int. Conf. on Humanoid Robotics
  (Humanoids)}, 2017, pp. 577--584.

\bibitem{Winkler18}
A.~W. {Winkler}, C.~D. {Bellicoso}, M.~{Hutter}, and J.~{Buchli}, ``Gait and
  trajectory optimization for legged systems through phase-based end-effector
  parameterization,'' \emph{IEEE Robotics and Automation Letters}, vol.~3,
  no.~3, pp. 1560--1567, 2018.

\bibitem{Neunert18}
M.~{Neunert}, M.~{Stäuble}, M.~{Giftthaler}, C.~D. {Bellicoso}, J.~{Carius},
  C.~{Gehring}, M.~{Hutter}, and J.~{Buchli}, ``Whole-body nonlinear model
  predictive control through contacts for quadrupeds,'' \emph{IEEE Robotics and
  Automation Letters}, vol.~3, no.~3, pp. 1458--1465, 2018.

\bibitem{Carius19}
J.~Carius, R.~Ranftl, V.~Koltun, and M.~Hutter, ``Trajectory optimization for
  legged robots with slipping motions,'' \emph{{IEEE} Robotics and Automation
  Letters}, vol.~4, no.~3, pp. 3013--3020, 2019.

\bibitem{Ratliff06}
N.~D. Ratliff, D.~M. Bradley, J.~A. Bagnell, and J.~E. Chestnutt, ``Boosting
  structured prediction for imitation learning,'' in \emph{Advances in Neural
  Information Processing Systems}, 2006, pp. 1153--1160.

\bibitem{Abbeel10}
P.~Abbeel, A.~Coates, and A.~Y. Ng, ``Autonomous helicopter aerobatics through
  apprenticeship learning,'' \emph{Int. J. Robotics Res.}, vol.~29, no.~13, pp.
  1608--1639, 2010.

\bibitem{Mordatch14}
I.~Mordatch and E.~Todorov, ``Combining the benefits of function approximation
  and trajectory optimization,'' in \emph{Robotics: Science and Systems X},
  2014.

\bibitem{Levine13}
S.~Levine and V.~Koltun, ``Guided policy search,'' in \emph{Int. Conf. on
  Machine Learning {ICML}}, 2013, pp. 1--9.

\bibitem{Levine14}
------, ``Learning complex neural network policies with trajectory
  optimization,'' in \emph{Int. Conf. on Machine Learning {ICML}}, 2014, pp.
  829--837.

\bibitem{Kahn17}
G.~Kahn, T.~Zhang, S.~Levine, and P.~Abbeel, ``{PLATO:} policy learning using
  adaptive trajectory optimization,'' in \emph{IEEE Int. Conf. on Robotics and
  Automation {ICRA}}, 2017, pp. 3342--3349.

\bibitem{Choudhury17}
S.~Choudhury, A.~Kapoor, G.~Ranade, S.~Scherer, and D.~Dey, ``Adaptive
  information gathering via imitation learning,'' in \emph{Robotics: Science
  and Systems XIII}, 2017.

\bibitem{Yang19}
Y.~Yang, K.~Caluwaerts, A.~Iscen, T.~Zhang, J.~Tan, and V.~Sindhwani, ``Data
  efficient reinforcement learning for legged robots,'' \emph{CoRR}, vol.
  abs/1907.03613, 2019.

\bibitem{Atkeson02}
C.~G. Atkeson and J.~Morimoto, ``Nonparametric representation of policies and
  value functions: {A} trajectory-based approach,'' in \emph{Advances in Neural
  Information Processing Systems {NIPS}}, 2002, pp. 1611--1618.

\bibitem{Zhong13}
M.~Zhong, M.~Johnson, Y.~Tassa, T.~Erez, and E.~Todorov, ``Value function
  approximation and model predictive control,'' in \emph{{IEEE} Symposium on
  Adaptive Dynamic Programming and Reinforcement Learning {ADPRL}}, 2013, pp.
  100--107.

\bibitem{Mansard18}
N.~Mansard, A.~DelPrete, M.~Geisert, S.~Tonneau, and O.~Stasse, ``Using a
  memory of motion to efficiently warm-start a nonlinear predictive
  controller,'' in \emph{IEEE Int. Conf. on Robotics and Automation {ICRA}},
  2018, pp. 2986--2993.

\bibitem{Ross10}
S.~Ross and D.~Bagnell, ``Efficient reductions for imitation learning,'' in
  \emph{Int. Conf. on Artificial Intelligence and Statistics {AISTATS}}, 2010,
  pp. 661--668.

\bibitem{Ross11}
S.~Ross, G.~J. Gordon, and D.~Bagnell, ``A reduction of imitation learning and
  structured prediction to no-regret online learning,'' in \emph{Int. Conf. on
  Artificial Intelligence and Statistics {AISTATS}}, 2011, pp. 627--635.

\bibitem{Jacobs91}
R.~A. Jacobs, M.~I. Jordan, S.~J. Nowlan, and G.~E. Hinton, ``Adaptive mixtures
  of local experts,'' \emph{Neural Computation}, vol.~3, no.~1, pp. 79--87,
  1991.

\bibitem{Farshidian17}
F.~{Farshidian}, M.~{Neunert}, A.~W. {Winkler}, G.~{Rey}, and J.~{Buchli}, ``An
  efficient optimal planning and control framework for quadrupedal
  locomotion,'' in \emph{IEEE Int. Conf. on Robotics and Automation {ICRA}},
  May 2017, pp. 93--100.

\bibitem{Li04}
W.~Li and E.~Todorov, ``Iterative linear quadratic regulator design for
  nonlinear biological movement systems,'' in \emph{Int. Conf. on Informatics
  in Control, Automation and Robotics {ICINCO}}, 2004, pp. 222--229.

\bibitem{Grandia19}
R.~Grandia, F.~Farshidian, R.~Ranftl, and M.~Hutter, ``Feedback {MPC} for
  torque-controlled legged robots,'' \emph{CoRR}, vol. abs/1905.06144, 2019.

\bibitem{Bertsekas05}
D.~P. Bertsekas, \emph{Dynamic programming and optimal control, 3rd
  Edition}.\hskip 1em plus 0.5em minus 0.4em\relax Athena Scientific, 2005.

\bibitem{Lin92}
L.~J. Lin, ``Self-improving reactive agents based on reinforcement learning,
  planning and teaching,'' \emph{Machine Learning}, vol.~8, pp. 293--321, 1992.

\bibitem{Kingma14}
D.~P. Kingma and J.~Ba, ``Adam: {A} method for stochastic optimization,'' in
  \emph{Int. Conf. on Learning Representations {ICLR}}, 2015.

\bibitem{Reddi18}
S.~J. Reddi, S.~Kale, and S.~Kumar, ``On the convergence of adam and beyond,''
  in \emph{Int. Conf. on Learning Representations ICLR}, 2018.

\end{thebibliography}
\end{document}